\newtheorem{proposition}{Proposition}[section]
\newcommand{\modify}[1]{\textcolor{black}{#1}}
\title{$f$-GAIL: Learning $f$-Divergence for Generative Adversarial Imitation Learning}
\author{
   Xin Zhang$^\dag$, Yanhua Li$^\dag$, Ziming Zhang$^\dag$, Zhi-Li Zhang$^\$$\\
Worcester Polytechnic Institute, USA$^\dag$, University of Minnesota, USA$^\$$\\
   \texttt{\{xzhang17,yli15,zzhang15\}@wpi.edu}, \texttt{zhzhang@cs.umn.edu}\\
  }
\begin{document}

\maketitle

\begin{abstract}
Imitation learning (IL) aims to learn a policy from expert demonstrations 
%
%
that minimizes the discrepancy between the learner and expert behaviors. 
Various imitation learning algorithms have been proposed with different {\em pre-determined} divergences to quantify the discrepancy. 
This naturally gives rise to the following question: {\em Given a set of expert demonstrations, which divergence can recover the expert policy more accurately with higher data efficiency?}
In this work, we propose $f$-GAIL, a new generative adversarial imitation learning (GAIL) model, that automatically learns a 
discrepancy measure from the $f$-divergence family as well as a policy capable of producing expert-like behaviors.
Compared with IL baselines with various predefined divergence measures, $f$-GAIL learns better policies with higher data efficiency in six physics-based control tasks. 

\end{abstract}
\section{Introduction}

%
%



Imitation Learning (IL) or Learning from Demonstrations (LfD) \cite{abbeel2004apprenticeship,atkeson1997robot,GAIL} aims to learn a policy directly from expert demonstrations, without access to the environment for more data or any reward signal.
%
%
%
%
One successful IL paradigm is Generative Adversarial Imitation Learning (GAIL)~\cite{GAIL}, 
%
%
%
which employs generative adversarial network (GAN)~\cite{goodfellow2014generative} 
to jointly learn a generator (as a stochastic policy) to mimic expert behaviors, and a discriminator (as a reward signal) to distinguish the generated vs expert behaviors.
The learned policy produces behaviors similar to the expert, and the similarity is evaluated using the reward signal, in Jensen-Shannon (JS) divergence (with a constant shift of $\log 4$~\cite{lin1991divergence}) between the distributions of learner vs expert behaviors.
%
Thus, GAIL can be viewed as a variational divergence minimization (VDM)~\cite{nowozin2016f} problem with JS-divergence as the objective. 

\begin{wrapfigure}{R}{0.5\textwidth}
\vspace{-5mm}
  \includegraphics[width=1\linewidth]{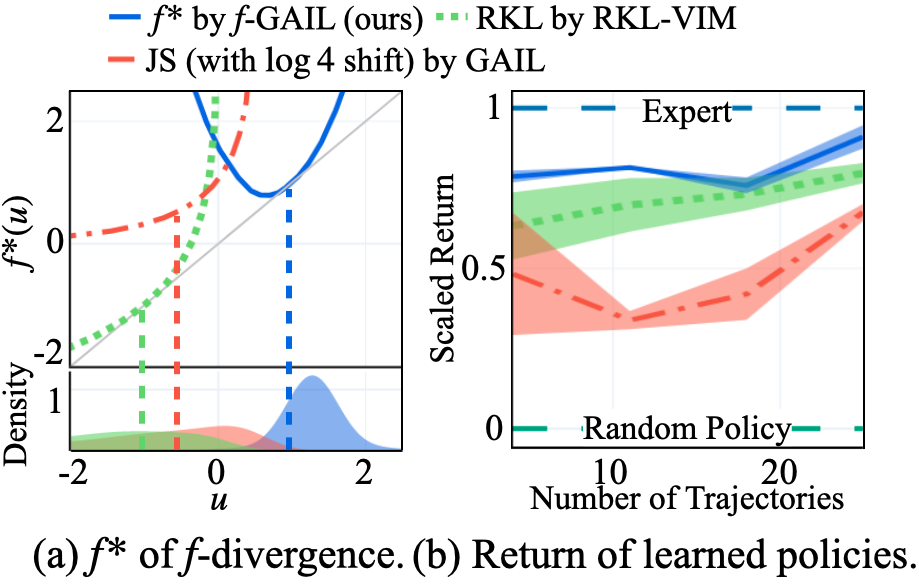}
  \caption{$f$-divergences and policies from GAIL, RKL-VIM, and $f$-GAIL on {\em Walker} task~\cite{todorov2012mujoco}.} 
  \label{fig:intro}
  \vspace{-6mm}
\end{wrapfigure}
Beyond JS-divergence (as originally employed in GAIL), 
%
variations of GAIL have been proposed~\cite{GAIL,fu2017learning,finn2016connection,ke2019imitation,ghasemipour2019divergence}, essentially using different divergence measures from the $f$-divergence family \cite{lin1991divergence,nowozin2016f}, for example, 
behavioral cloning (BC)~\cite{pomerleau1991efficient} with Kullback–Leibler (KL) divergence~\cite{lin1991divergence},
AIRL~\cite{fu2017learning} and RKL-VIM~\cite{ke2019imitation} with reverse KL (RKL) divergence~\cite{lin1991divergence},
and DAGGER~\cite{ross2011reduction} with the Total Variation (TV)~\cite{barron1992distribution}.
%
%
%
%
%
%
Choosing the right divergence is crucial in order to recover the expert policy more accurately with high data efficiency (as observed in~\cite{ke2019imitation,ghasemipour2019divergence,GAIL,fu2017learning,nowozin2016f,yu2020training}).
%
%


\noindent{\bf Motivation.} All the above literature works rely on a {\em fixed} divergence measure manually chosen {\em a priori} 
from a set of well-known divergence measures (with an explicit analytic form), e.g., KL, RKL, JS, ignoring the large space of all potential divergences. 
Thus, the resulting IL network likely learns a sub-optimal learner policy.
For example, Fig.~\ref{fig:intro} shows the results 
from GAIL~\cite{GAIL} and RKL-VIM~\cite{ke2019imitation}, which employ JS and RKL divergences, respectively. The learned input density distributions (to the divergence functions) are quite dispersed (thus with large overall divergence) in Fig.~\ref{fig:intro}(a), leading to  learner policies with only 30\%-70\% expert return in Fig.~\ref{fig:intro}(b).
In this work, we are motivated to develop a {\em learnable} model to search and automatically find an appropriate discrepancy measure from the $f$-divergence family for GAIL.

\noindent{\bf Our $f$-GAIL.} We propose $f$-GAIL -- a new generative adversarial imitation learning model,
with a {\em learnable} $f$-divergence from the underlying expert demonstrations. The model automatically learns an  $f$-divergence between expert and learner behaviors, and a policy that produces expert-like behaviors.
%
%
%
In particular, we propose a deep neural network structure to model the $f$-divergence space. 
%
%
Fig.~\ref{fig:intro} shows a quick view of our results: $f$-GAIL learns a new and unique $f$-divergence, with more concentrated input density distribution (thus smaller overall divergence) than JS and RKL in Fig.~\ref{fig:intro}(a); and its learner policy has higher performance (80\%-95\% expert return) in Fig.~\ref{fig:intro}(b) (See more details in Sec~\ref{sec:experiments}). 
%
%
The code for reproducing the experiments are available at \url{https://github.com/fGAIL3456/fGAIL}. 
%
%
Our key contributions are summarized below:
\begin{itemize}[nosep, leftmargin=*]
    \item 
We are the {\em first} to model imitation learning with a \textbf{\em learnable divergence measure} from $f$-divergence space,
which yields better learner policies, than pre-defined divergence choices (Sec~\ref{sec:problem}).
%
    \item We develop an $f^*$-network structure, to model the space of $f$-divergence family, by enforcing two constraints, including i) convexity and ii) $f(1)=0$ (Sec~\ref{sec:il}).
%
    \item We present promising comparison results of learned $f$-divergences 
    and the performances of learned policies with baselines in six different physics-based control tasks (Sec~\ref{sec:experiments}).
\end{itemize}



\section{Problem Definition}
\label{sec:problem}
\subsection{Preliminaries}
\textbf{Markov Decision Processes (MDPs).} In an MDP denoted as a 6-tuple $\langle {\cal S}, {\cal A}, {\cal P}, r, \rho_0, \gamma \rangle$ where $\mathcal{S}$ is a set of states, $\mathcal{A}$ is a set of actions, ${\cal P}: \mathcal{S}\times\mathcal{A}\times\mathcal{S}\mapsto [0,1]$ is the transition probability distribution, $r:\mathcal{S}\times\mathcal{A}\mapsto\mathbb{R}$ is the reward function, $\rho_0:\mathcal{S}\mapsto\mathbb{R}$ is the distribution of the initial state $s_0$, and $\gamma \in [0, 1]$ is the discount factor. 
We denote the expert policy as $\pi_E$, and the learner policy as $\pi$. 
In addition, we use an expectation with respect to a policy $\pi$ to denote an expectation with respect to the trajectories it generates: $\mathbb{E}_\pi[h(s,a)]\triangleq \mathbb{E}[\sum_{t=0}^\infty\gamma^t h(s_t,a_t)]$, with $s_0\sim\rho_0,a_t\sim\pi(a_t|s_t),s_{t+1}\sim\mathcal{P}(s_{t+1}|s_t,a_t)$ and $h$ as any function.


\textbf{$f$-Divergence.} $f$-Divergence \cite{lin1991divergence,liese2006divergences,csiszar2004information} is a broad class of divergences that measures the difference between two probability distributions. Different choices of $f$ functions recover different divergences, e.g. the Kullback-Leibler (KL) divergence, Jensen-Shannon (JS) divergence, or total variation (TV) distance \cite{kullback1951information}. 
Given two distributions $P$ and $Q$, an absolutely continuous density function $p(x)$ and $q(x)$ over a finite set of random variables $x$ defined on the domain $\cal X$ , an $f$-divergence is defined as
%
\begin{align}
D_f(P\|Q)=\int_{\cal X}q(x)f\Big(\frac{p(x)}{q(x)}\Big)\text{d} x,\label{eq:fdiv}
\end{align}
with the generator function $f: \mathbb{R}_{+} \rightarrow \mathbb{R}$ as a convex, lower-semicontinuous function satisfying $f(1) = 0$. 
The \textit{convex conjugate} function $f^*$ also known as the \textit{Fenchel conjugate}~\cite{hiriart2012fundamentals} is $f^*(u) = \sup_{v\in\mathrm{dom}_f}\{vu-f(v)\}$. 
%
%
$D_f(P\|Q)$ is lower bounded by its variational transformation, i.e.,  $D_f(P\|Q)\geq \sup_{u\in\mathrm{dom}_{f^*}}\{\mathbb{E}_{x\sim P}[u] - \mathbb{E}_{x\sim Q}[f^*(u)]\}$ (See more details in \cite{nowozin2016f}).
%
Common choices of $f$ functions are summarized in Tab.~\ref{tab:relationship} and the plots of corresponding $f^*$ are visualized in Fig.~\ref{fig:f_KDE}. 

\textbf{Imitation Learning as Variational $f$-Divergence Minimization (VDM).} 
Imitation learning aims to learn a policy for performing a task directly from expert demonstrations.
GAIL~\cite{GAIL} is an IL solution employing GAN~\cite{goodfellow2014generative} structure, that jointly learns a generator (i.e., learner policy) and a discriminator (i.e., reward signal). 
In the training process of GAIL, the learner policy imitates the behaviors from the expert policy $\pi_E$, to match the generated state-action distribution with that of the expert. 
The distance between these two distributions, measured by JS divergence, is minimized. Thus the GAIL objective is stated as follows:
\begin{align}
\min_{\pi}  
     \max_{T} \mathbb{E}_{\pi_E}[\log T(s,a)] + \mathbb{E}_{\pi}[\log (1-T(s,a))]- \mathcal{H}(\pi),\label{eq:GAIL}
\end{align}
where $T$ is a binary classifier distinguishing state-action pairs generated by $\pi$ vs $\pi_E$, and it can be viewed as a reward signal used to guide the training of policy $\pi$.
${\cal H}(\pi)=\mathbb{E}_{\pi}[-\log\pi(a|s)]$ is the $\gamma$-discounted causal entropy of the policy $\pi$~\cite{GAIL}.
%
%
%
%
%
%
Using the variational lower bound of an $f$-divergence, several studies~\cite{ke2019imitation,ghasemipour2019divergence,nowozin2016f,arumugam2019dilip} have extended GAIL to a general variational $f$-divergence minimization (VDM) problem for a fixed $f$-divergence (defined by a generator function $f$), with an objective below,
\begin{align}
     &\min_{\pi}  
     \max_{T} \mathbb{E}_{\pi_E}[T(s,a)] - \mathbb{E}_{{\pi}}[f^*(T(s,a))]- \mathcal{H}(\pi).\label{eq:fGAIL-simple}
\end{align} 
However, all these works rely on manually choosing an $f$-divergence measure, i.e., $f^*$, which is limited by those well-known $f$-divergence
choices (ignoring the large space of all potential $f$-divergences), thus lead to a sub-optimal learner policy. 
Hence, we are motivated to develop a new and more general GAIL model, {which automatically searches an $f$-divergence}
from the $f$-divergence space given expert demonstrations.

\subsection{Problem Definition: Imitation Learning with Learnable $f$-Divergence.} \label{sec:il:il}
%
%
  %
  %
\noindent{\bf Divergence Choice Matters!} As observed in~\cite{ke2019imitation,ghasemipour2019divergence,fu2017learning,nowozin2016f,yu2020training}, given an imitation learning task, defined by a set of expert demonstrations, different divergence choices lead to different learner policies.
%
%
%
Taking KL divergence and RKL divergence (defined in eq.~(\ref{eq:KL-RKL}) below) as an example, let $p(x)$ be the true distribution, and $q(x)$ be the approximate distribution learned by minimizing its divergence from $p(x)$. 
With KL divergence, the difference between $p(x)$ and $q(x)$ is weighted by $p(x)$. Thus, 
in the ranges of $x$ with $p(x)=0$, 
the discrepancy of $q(x)>0$ from $p(x)$ will be ignored. 
On the other hand, with RKL divergence, 
$q(x)$ becomes the weight. In the ranges of $x$ with $q(x)=0$, RKL divergence does not capture the discrepancy of $q(x)$ from $p(x)>0$. 
Hence, KL divergence can be used to better learn multiple modes from a true distribution $p(x)$ 
(i.e., for mode-covering), while RKL divergence will perform better in learning 
a single mode (i.e., for mode-seeking).
%
%
%
%
\begin{align}
D_{KL}(P\|Q)&=\int_{\cal X}p(x)\log\Big(\frac{p(x)}{q(x)}\Big)\text{d} x, & D_{RKL}(P\|Q)&=\int_{\cal X}q(x)\log\Big(\frac{q(x)}{p(x)}\Big)\text{d} x.\label{eq:KL-RKL}
 \end{align}
%
%
Beyond KL and RKL divergences, there are infinitely many choices in the $f$-divergence family, where each divergence measures the discrepancy between expert vs learner distributions from a unique perspective. 
Hence, choosing the right divergence for an imitation learning task is crucial and can more accurately recover the expert policy with higher data efficiency.

\noindent{\bf $f$-GAIL: Imitation Learning with Learnable $f$-Divergence.} 
%
%
%
%
%
%
%
%
Given a set of expert demonstrations to imitate and learn from, the $f$-divergence, that can highly evaluate the discrepancy between the learner and expert distributions (i.e., the largest $f$-divergence from the family), can better guide the learner to learn from the expert (as having larger improvement margin).
As a result, 
in addition to the policy function $\pi$, the reward signal function $T$, 
we aim to learn a (convex conjugate) generator function $f^*$
as a regularization term to the objective.
%
%
%
%
%
The $f$-GAIL objective is as follows,
 \begin{equation}\label{eq:fGAIL}
\min_{\pi} \max_{\textcolor{red}{f^*\in\mathcal{F^*}}, T} \mathbb{E}_{\pi_E}[T(s,a)] - \mathbb{E}_{{\pi}}[f^*(T(s,a))] - \mathcal{H}(\pi),
\end{equation}
%
%
%
where $\mathcal{F}^*$ denotes the admissible function space of $f^*$,  namely, each function in $\mathcal{F}^*$ represents a valid $f$-divergence.
%
%
The conditions for a generator function $f$ to represent an $f$-divergence include: i) convexity and ii) $f(1)=0$. In other words, the corresponding convex conjugate $f^*$ needs to be i) convex (\textbf{the convexity constraint}), ii) $\inf_{u\in \text{dom}_{f^*}}\{f^*(u)-u\} = 0$ (\textbf{the zero gap constraint}, namely, the minimum distance from $f^*(u)$ to $u$ is $0$)\footnote{\modify{Convex and zero-gap constraints are necessary and sufficient conditions to guarantee an $f$-divergence, based on $f^{\ast\ast}=f$ (see \S3.3.2 in \cite{boyd2004convex}) for convex functions, i.e., $f(1)=f^{\ast\ast}(1)=\max_{u}\{u-f^\ast(u)\}=0$.}}. Functions satisfying these two conditions form the admissible space $\mathcal{F}^*$.
Note that the zero gap constraint can be obtained by combining convex conjugate $f(v) = \sup_{u\in\text{dom}_{f^*}}\{uv-f^*(u)\}$ and $f(1)=0$. 
Tab.~\ref{tab:relationship}\footnote{Similar observations can be found in \cite{ke2019imitation,ghasemipour2019divergence}.} below shows a comparison of our proposed $f$-GAIL with the state-of-the-art GAIL models~\cite{GAIL,fu2017learning,ghasemipour2019divergence,ke2019imitation}. These models use pre-defined $f$-divergences, where $f$-GAIL can learn an $f$-divergence from $f$-divergence family.

\begin{table}[h!]
    \centering
    \vspace{-6mm}
    \caption{$f$-Divergence and imitation learning ($\text{JS}^*$ is a constant shift of JS divergence by $\log 4$).}
    \label{tab:relationship}
    \setlength{\tabcolsep}{6pt}{
    \begin{tabular}{c|c|c|c|c}
\hline
    Divergence  & KL& RKL & $\text{JS}^*$ & {\em Learned} $f$-div.\\
\hline
$f^*(u)$ & $e^{u-1}$ & $-1-\log(-u)$ &  $-\log(1-e^u)$ & $f^*\in\mathcal{F^*}$ from eq.~(\ref{eq:fGAIL}) \\
\hline
IL Method & FAIRL\cite{ghasemipour2019divergence}& RKL-VIM\cite{ke2019imitation}, AIRL\cite{fu2017learning}& GAIL\cite{GAIL} & $f$-GAIL (Ours)\\
    \hline
\end{tabular}}
    \vspace{-4mm}
\end{table}{}

\begin{figure}
\centering
  \includegraphics[width=1\textwidth]{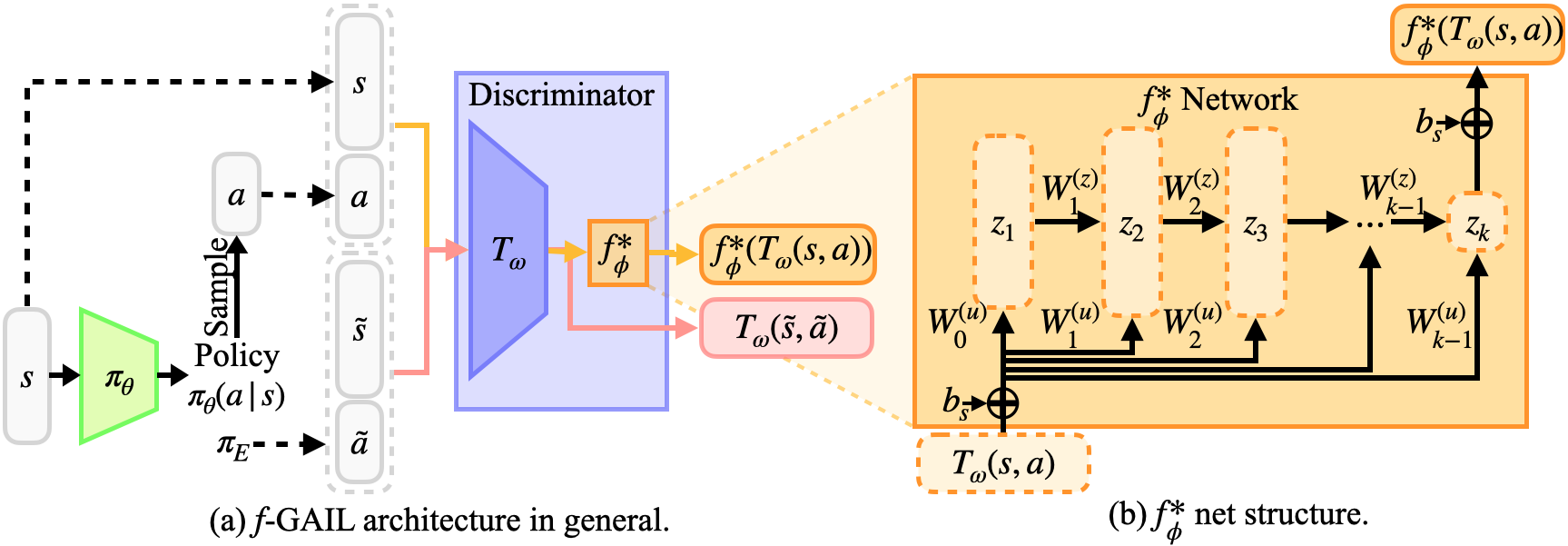}
  \vspace{-6mm}
\caption{$f$-GAIL architecture. ($T_\omega$ and $f^*_\phi$ are learned through a joint optimization in Discriminator.)}
\label{fig:fGAIL_architecture}
\vspace{-4mm}
\end{figure}

\section{Imitation Learning with Learnable $f$-Divergence}
\label{sec:il}
There are three functions to learn in the $f$-GAIL objective in eq.~(\ref{eq:fGAIL}), including the policy $\pi$, the $f^*$-function $f^*$, and the reward signal $T$, where we model them with three deep neural networks parameterized by $\theta,\omega$ and $\phi$ respectively.
Following the generative-adversarial approach~\cite{goodfellow2014generative},  $f^*_{\phi}$ and $T_\omega$ networks together can be viewed as a discriminator.
The policy network $\pi_\theta$ is the generator. 
%
As a result, the goal is to find the  saddle-point of the objective in eq.~(\ref{eq:fGAIL}), where we minimize it with respect to $\theta$ and maximize it with respect to $\omega$ and $\phi$.
In this section, we will tackle two key challenges including
i) how to design an algorithm to jointly learn all three networks to solve the $f$-GAIL problem in eq.~(\ref{eq:fGAIL})? (See Sec~\ref{sec:alg}); and
ii) how to design the $f^*_\phi$ network structure to enforce it to represent a valid $f$-divergence? (See Sec~\ref{sec:fGAIL}).
Fig.~\ref{fig:fGAIL_architecture} shows the overall $f$-GAIL model structure.
%

\subsection{$f$-GAIL Algorithm}\label{sec:alg}

Our proposed $f$-GAIL algorithm is presented in Alg.~\ref{algo:fGAIL}. It uses the alternating gradient method
(instead of one-step gradient method in $f$-GAN \cite{nowozin2016f}) to first update the $f^*$-function $f^*_\phi$ and the reward signal $T_\omega$ in a single back-propagation, and then update the policy $\pi_\theta$. It utilizes Adam~\cite{kingma2014adam} gradient step on $\omega$ to increase the objective in eq.~(\ref{eq:fGAIL}) with respect to both $T_{\omega}$ and $f^*_\phi$, followed by a shifting operation on $f^*_{\phi}$ to guarantee the zero gap constraint (See Sec~\ref{sec:fGAIL} and eq.~(\ref{eq:shift_f})). Then, it uses the Trust Region Policy Optimization (TRPO)~\cite{schulman2015trust} step on $\theta$ to decrease eq.~(\ref{eq:fGAIL}) with respect to $\pi_\theta$.

\vspace{-2mm}
\begin{algorithm}
 \caption{$f$-GAIL}\label{algo:fGAIL}
 \begin{algorithmic}[1]
 \REQUIRE Initialize parameters of policy $\pi_\theta$, reward signal $T_\omega$, and $f^*_{\phi}$ networks as $\theta_0$, $\omega_0$ and $\phi_0$ (with shifting operation eq.~(\ref{eq:shift_f}) required on $\phi_0$ to enforce the zero gap constraint); expert trajectories $\tau_E\sim\pi_E$ containing state-action pairs.
 \ENSURE Learned policy $\pi_{\theta}$, $f^*$-function $f^*_{\phi}$ and reward signal $T_\omega$.
 \FOR {each epoch $i = 0,1,2,...$}
    \STATE Sample trajectories $\tau_i\sim \pi_{\theta_i}$. 
    \STATE Sample state-action pairs: ${\mathcal{D}}_E\sim\tau_E$ and $\mathcal{D}_i\sim\tau_i$ with the same batch size.
    \STATE Update $\omega_i$ to $\omega_{i+1}$ and $\phi_i$ to $\phi_{i+1}$ by ascending with the gradients:
    \vspace{-2mm}
    \begin{align*}
        \Delta_{w_i} &= \hat{\mathbb{E}}_{\mathcal{D}_E}[\nabla_{\omega_i} T_{\omega_i}(s,a)] - \hat{\mathbb{E}}_{\mathcal{D}_i}[\nabla_{\omega_i} f^*_{\phi_{i}}(T_{\omega_i}(s,a))],\quad \Delta_{\phi_i} =  -\hat{\mathbb{E}}_{\mathcal{D}_i}[\nabla_{\phi_i} f^*_{\phi_{i}}(T_{\omega_{i}}(s,a))].
    \end{align*}
    \vspace{-5mm}
\STATE Estimate the minimum gap $\delta$ with gradient descent in Alg.~\ref{alg:GD} and shift ${f}^*_{\phi_{i+1}}$ (by eq.~\ref{eq:shift_f}).    
    \STATE Take a policy step from $\theta_i$ to $\theta_{i+1}$, using the TRPO update rule to decrease the objective:
    \vspace{-2mm}
    \begin{equation*}
        - \hat{\mathbb{E}}_{\mathcal{D}_i}[f^*_{\phi_{i+1}}(T_{\omega_{i+1}}(s,a))] - \mathcal{H}(\pi_{\theta_i}).\nonumber
    \end{equation*}
    \vspace{-5mm}
\ENDFOR
\end{algorithmic}
\end{algorithm}
\vspace{-1mm}

\subsection{Enforcing $f^*_\phi$ Network to Represent the $f$-Divergence Space}
\label{sec:fGAIL}
The architecture of the $f^*_\phi$ network is crucial to obtain a family of convex conjugate generator functions $f^*$ that represents the entire $f$-divergence space. 
To achieve this goal, two constraints need to be guaranteed (as discussed in Sec~\ref{sec:fGAIL}), including i) the \textbf{convexity constraint}, i.e., $f^*(u)$ is convex, and ii) the \textbf{zero gap constraint}, i.e., $\inf_{u\in \text{dom}_{f^*}}\{f^*(u)-u\} = 0$. 
%
%
%
%
To enforce the convex constraint, we implement the $f^*_\phi$ network with a neural network structure convex to its input. Moreover, in each epoch, we estimate the 
minimum gap of $\delta=\inf_{u\in{\text{dom}_{f^*}}}\{f^*(u)-u\}$, with which we shift it 
to enforce the zero gap constraint. Below, we detail the design of the $f^*_\phi$ network.


%

\noindent\textbf{1. Convexity constraint on $f^*_\phi$ network.}
%
%
The $f^*_\phi$ network takes a scalar input $u$ from the reward signal network  $T_\omega$ output, i.e., $u=T_\omega({s},{a})$, with $({s},{a})$ as a state-action pair generated by $\pi_\theta$. 
To ensure the convexity of the $f^*_\phi$ network, we employ the structure of a fully input convex neural network (FICNN)~\cite{amos2017input} with a composition of convex nonlinearites (e.g., ReLU) and linear mappings (See Fig.~\ref{fig:fGAIL_architecture}). 
%
%
The convex structure consists of multiple layer perceptrons. Differing from a fully connected feedforward structure, it includes shortcuts from the input layer $u$ to all subsequent layers, i.e., for each layer $i = 0, \cdots, k-1$, 
\begin{align}
     z_{i+1}&=g_i(W_{i}^{(z)}z_i+W_i^{(u)}z_0+b_i),\quad\mbox{with}\quad f^*_{\phi}(u)=z_k+b_{s}\quad \mbox{and}\quad z_0=u+b_{s},\label{eq:FICNN}
\end{align}
where $z_i$ denotes the $i$-th layer activation, $g_i$ represents non-linear activation functions, with 
$W_0^{(z)}\equiv0$. 
$b_{s}$ is a bias over both the input $u$ and the last layer output $z_k$, which is used to enforce the zero gap constraint (as detailed below). 
%
As a result, the parameters in $f^*_\phi$ include
$\phi = \{W_{0:k-1}^{(u)},W_{1:k-1}^{(z)}, b_{0:k-1}, b_s\}$ .
Restricting $W_{1:k-1}^{(z)}$ to be non-negative and $g_i$'s to be convex non-decreasing activation functions (e.g. ReLU) guarantee the network output to be convex to the input $u=T_\omega({s},{a})$.
The convexity follows the fact that a non-negative sum of convex functions is convex and that the composition of a convex and convex non-decreasing function is also convex~\cite{boyd2004convex}. 
To ensure the non-negativity on $W_{1:k-1}^{(z)}$, in the training process, we clip the $W_{1:k-1}^{(z)}$ to be at least 0, i.e., $\mathrm{w} = \max\{0, \mathrm{w}\}$ for $\forall \mathrm{w}\in W_{1:k-1}^{(z)}$, after each update to $\phi$.
%
%
 
 
\begin{wrapfigure}{R}{0.3\textwidth}
\vspace{-8mm}

  \includegraphics[width=0.8\linewidth]{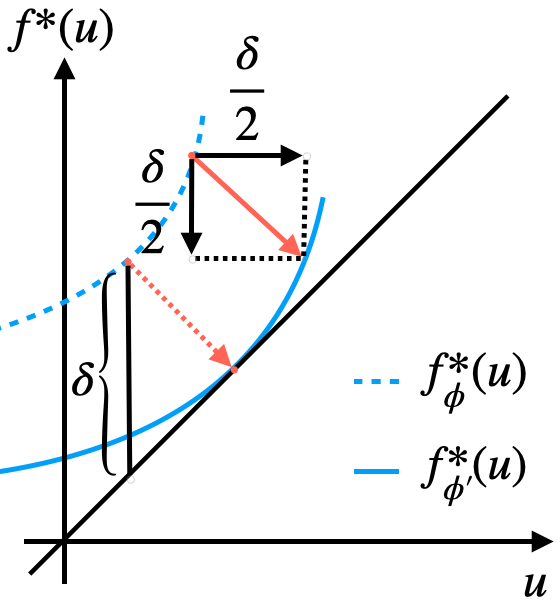}
  \caption{Illustration of shifting $f^*_\phi$.} 
  \label{fig:f_shift_demo}
  \vspace{-4mm}

\begin{minipage}{0.3\textwidth}
     \begin{algorithm}[H]
        \caption{$\delta$ Estimation}\label{alg:GD}
     \begin{algorithmic}[1]
     \REQUIRE $f^*_{\phi}$ network; initial $u_0$; $\eta >0$.
     \ENSURE $\delta$.
     \FOR {$i = 1,2,...$}
        \STATE $h= \nabla_u f_\phi^*(u)-1$;
        \STATE $u_i=u_{i-1}-\eta \cdot h$;
    \ENDFOR
    \STATE $\delta = f_\phi^*(u_i)-u_i$.
    \end{algorithmic}
    \end{algorithm}
\end{minipage}
\vspace{-1cm}
\end{wrapfigure}


\textbf{2. Zero gap constraint on $f^*_\phi$ network}, i.e., $\inf_{u\in \text{dom}_{f^*_\phi}} \{f^*_\phi (u)-u\} = 0$.
This constraint requires $f^*_\phi(u)\geq u$ for $\forall u\in \text{dom}_{f^*_\phi}$, with the equality attained.
For a general convex function $f^*_\phi(u)$, its gap from $u$, defined as $\delta=\inf_{u\in \text{dom}_{f^*_\phi}} \{f^*_\phi(u)-u\}$, 
is not necessarily zero.
%
%
We enforce the zero gap constraint by estimating $\delta$ and shifting $f^*_\phi(u)$ based on $\delta$ in each training epoch.
We directly estimate the minimum gap $\delta$ by gradient descent with respect to $u$.
%
Using $\delta$, we shift $f^*_\phi(u)$ as follows,
 \begin{align}\label{eq:shift_f}
     {f}^*_{\phi^\prime}(u) = {f}^*_\phi(u-\frac{\delta}{2})-\frac{\delta}{2}, \mathrm{where\ } \delta = \inf_{u\in{\text{dom}_{f^*_\phi}}}\{{f}^*_\phi(u)-u\}. 
 \end{align} 
 This shift guarantees zero gap constraint, and we delegate the proof to Appendix~\ref{Sec:App:Proofeq7}.
In each epoch, the estimation process of $\delta$ is detailed in Alg.~\ref{alg:GD}, and the shift operation is implemented by updating $b'_s=b_s-\delta/2$. Fig.~\ref{fig:f_shift_demo} illustrates the operations of estimating $\delta$ and shifting $f^*_\phi$. Note that $\delta$ represents the minimum gap in function value between $f^*_\phi(u)$ and $u$. Shifting $\delta/2$ over both input and output space of $f^*_\phi(u)$ {(i.e., Line~5 in Alg.~\ref{algo:fGAIL})} enforces the zero gap constraint. Note that this shifting operation is also performed, when initializing the parameters $\phi_0$ for $f^*_\phi(u)$, to make sure the training starts from a valid $f$-divergence\footnote{Theoretically, given $\delta$ (defined as an infimum), it may not be achievable with a feasible $u\in\text{dom}f^*_\phi$. However, empirically, given the diverse input distributions of $f^*$ (See Sec~\ref{sec:experiment:fine_tuning}), we can always introduce a projection operator \cite{boyd2004convex} to limit the feasible space of $u$ for a better control of the shift operation. In our experiments, we never found any issue when directly applying Alg.~\ref{alg:GD} for the shifting operation.}. 




\section{Experiments}
\label{sec:experiments}


We evaluate Alg.~\ref{algo:fGAIL} by comparing it with baselines on six physical-based control tasks, including the CartPole~\cite{barto1983neuronlike} from the classic RL literature, and five complex tasks simulated with MuJoCo~\cite{todorov2012mujoco}, such as HalfCheetah
, Hopper
, Reacher
, Walker
, and Humanoid. 
By conducting experiments on these tasks, we show that {\em i)} our $f$-GAIL algorithm can learn diverse $f$-divergences, comparing to the limited choices in the literature (See Sec~\ref{sec:experiment:fine_tuning}); {\em ii)} $f$-GAIL algorithm always learn policies performing better than baselines (See Sec~\ref{sec:experiment:performance}); {\em iii)} $f$-GAIL algorithm is robust in performance with respect to structure  changes in the $f^*_\phi$ network (See Sec~\ref{sec:experiment:parameter}).


Each task in the experiment comes with a true reward function, defined in the OpenAI Gym \cite{brockman2016openai}. We first use these true reward functions to train expert policies with trust region policy optimization (TRPO) \cite{schulman2015trust}. The trained expert policies are then utilized to generate expert demonstrations. 
To evaluate the data efficiency of $f$-GAIL algorithm, 
we sampled datasets of varying trajectory counts from the expert policies, while each trajectory consists of about $50$ state-action pairs. Below are five IL baselines
,
we implemented to compare against $f$-GAIL. 

\begin{itemize}[nosep, leftmargin=*]
    \item {\em Behavior cloning (BC)} \cite{pomerleau1991efficient}: A set of expert state-action pairs is split into 70\% training data and 30\% validation data. The policy is trained with supervised learning. BC can be viewed as 
    minimizing KL divergence between expert's and learner's policies~\cite{ke2019imitation,ghasemipour2019divergence}. 
%
%
    \item {\em Generative adversarial imitation learning (GAIL)}~\cite{GAIL}: GAIL is an IL method using GAN architecture~\cite{goodfellow2014generative}, that minimizes JS divergence between expert's and learner's behavior distributions. 
    \item {\em BC initialized GAIL (BC+GAIL)}: As discussed in GAIL \cite{GAIL}, BC initialized GAIL will help boost GAIL performance. We pre-train a policy with BC and use it as initial parameters to train GAIL.
    \item {\em Adversarial inverse reinforcement learning (AIRL)}~\cite{fu2017learning}: 
    AIRL applies the adversarial training approach to recover the reward function and its policy at the same time, which is equivalent to minimizing the reverse KL (RKL) divergence of state-action visitation frequencies between the expert and the learner~\cite{ghasemipour2019divergence}.
%
%
    \item {\em Reverse KL - variational imitation (RKL-VIM)}~\cite{ke2019imitation}: the algorithm uses the RKL divergence instead of the JS divergence to quantify the divergence between expert and learner in GAIL architecture\footnote{Both AIRL and RKL-VIM can be viewed as RKL divergence minimization problem. However, they use different lower bounds on RKL divergence (See details in~\cite{ghasemipour2019divergence} and~\cite{ke2019imitation,nowozin2016f}).}.
\end{itemize}{}

For fair comparisons, the policy network structures $\pi_\theta$ of all the baselines and $f$-GAIL are the same in all experiments, with
two hidden layers of 100 units each, and tanh nonlinearlities in between. 
The implementations of reward signal networks and discriminators vary according to baseline architectures, and we delegate these
implementation details to Appendix~\ref{appendix:experiment}.
All networks were always initialized randomly at the start of each trial. For each task, we gave GAIL, BC+GAIL, AIRL, RKL-VIM and $f$-GAIL exactly the same amount of environment interactions for training.

 
 
  \begin{figure*}
\centering
\minipage{0.8\textwidth}
\centering
\includegraphics[width=1\textwidth]{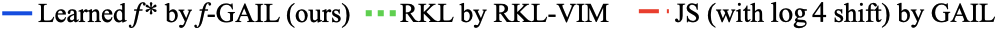}
\endminipage

\centering
\minipage{0.30\textwidth}
\centering
  \includegraphics[width=0.9\textwidth]{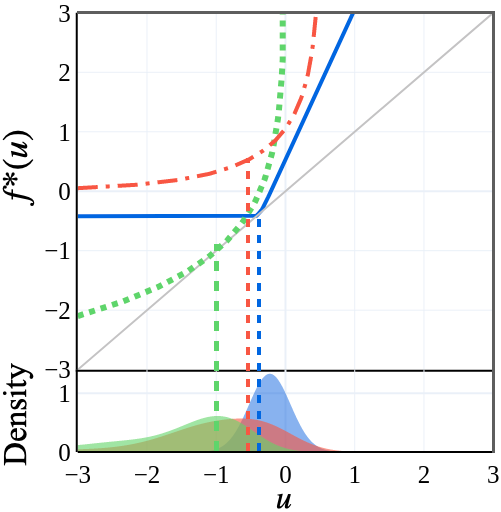}
    \vspace{-2mm}
    \subcaption{CartPole}
    \label{fig:fKDE_cartpole}
\endminipage
\hspace{1mm}
\minipage{0.30\textwidth}
\centering
  \includegraphics[width=0.9\textwidth]{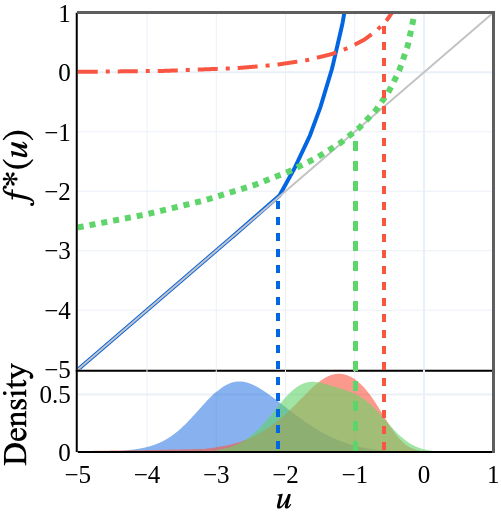}
    \vspace{-2mm}
    \subcaption{HalfCheetah}
    \label{fig:fKDE_HC}
\endminipage
%
\hspace{1mm}
\minipage{0.30\textwidth}
\centering
  \includegraphics[width=0.9\textwidth]{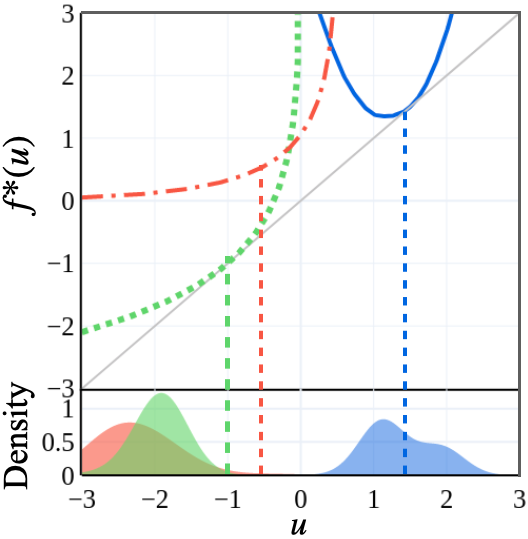}
    \vspace{-2mm}
  \subcaption{Hopper}
  \label{fig:fKDE_hopper}
\endminipage

\centering
\minipage{0.30\textwidth}
\centering
  \includegraphics[width=0.9\textwidth]{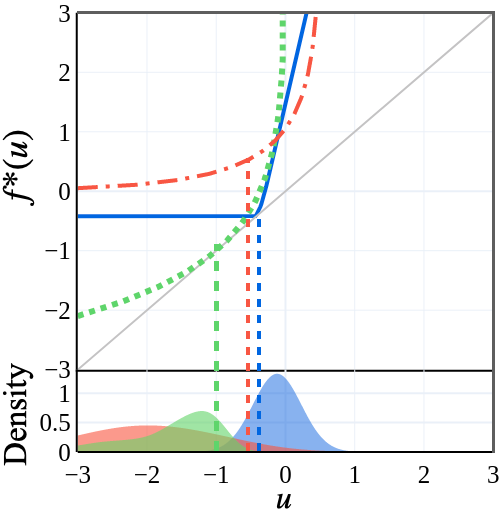}
    \vspace{-2mm}
    \subcaption{Reacher}
    \label{fig:fKDE_reacher}
\endminipage
\hspace{1.2mm}
\minipage{0.30\textwidth}
\centering
  \includegraphics[width=0.9\textwidth]{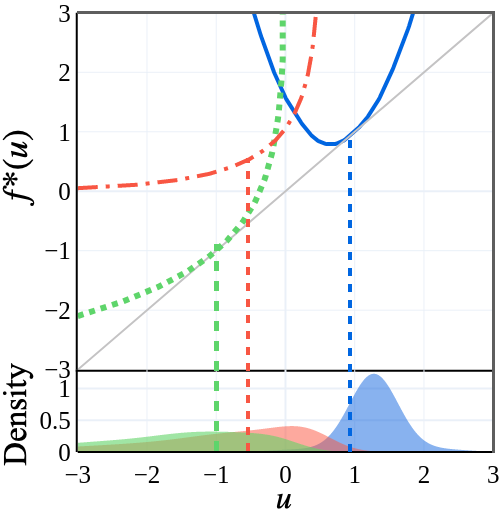}
    \vspace{-2mm}
    \subcaption{Walker}
    \label{fig:fKDE_walker}
\endminipage
%
\vspace{1mm}
\hspace{1mm}
\minipage{0.30\textwidth}
\centering
  \includegraphics[width=0.9\textwidth]{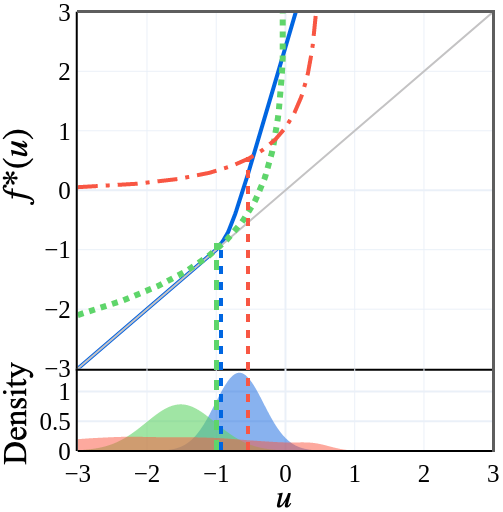}
    \vspace{-2mm}
  \subcaption{Humanoid}
  \label{fig:fKDE_humanoid}
\endminipage

\vspace{-2mm}
\caption{The learned $f^*_\phi(u)$ functions match the empirical input distributions at the zero gap regions with $f^*_\phi(u)-u\approx 0$, equivalently, $f(p(s,a)/q(s,a))\approx f(1)=0$, with close expert vs learner behavior distributions (i.e., $p$ vs $q$). 
The distributions of input $u$ were estimated by kernel density estimation~\cite{sheather1991reliable} with Gaussian kernel of bandwidth $0.3$.}
\label{fig:f_KDE}
\vspace{-4mm}
\end{figure*}

\subsection{$f^*_\phi$ Learned from $f$-GAIL}\label{sec:experiment:fine_tuning}
%
%
Fig.~\ref{fig:f_KDE} shows that $f$-GAIL learned unique ${f^*_\phi(u)}$ functions for all six tasks, and they are different from those well-known divergences, such as RKL and JS divergences. 
%
%
Clearly, the learned $f^*_\phi(u)$'s are convex and with zero gap from $u$, thus represent valid $f$-divergences.
Moreover, {\em the learned $f$-divergences are similar, when the underlying tasks share commonalities.} 
For example, the two ${f^*_\phi(u)}$ functions learned from CartPole and Reacher tasks (Fig.~\ref{fig:f_KDE}(a) and (d)) are similar, because the two tasks  are similar, i.e., both aiming to keep a balanced distance from the controlling agent to a target. 
On the other hand, both Hopper and Walker tasks aim to train the agents (with one foot for Hopper and two feet for Walker) to proceed as fast as possible, thus their learned $f^*_\phi(u)$ are similar (Fig.~\ref{fig:f_KDE}(c) and (e)). 
(See Appendix~\ref{appendix:experiment} for descriptions and screenshots of tasks.)
\modify{We also plot Fig.~\ref{fig:f} to show that given a task, the learned $f^*$ functions are consistent (small variance) for different sample sizes. Similar observations are made for tasks CartPole, Reacher and Humanoid as well.}

In state-of-the-art IL approaches and our $f$-GAIL (from eq.~(\ref{eq:fGAIL-simple}) and (\ref{eq:fGAIL})), the $f^*$-function takes the learner reward signal $u=T_\omega({s},{a})$ (over generated state-action pairs $(s,a)$'s) as input.
%
%
By examining the distribution of 
$u$, two criteria can indicate that the learner policy $\pi_\theta$ is close to the expert $\pi_E$: 
\begin{enumerate}[nosep, leftmargin=*]
    \item[i.] \underline{\em $u$ centers around zero gap}, i.e., $f^*({u})-{u}\approx0$. This corresponds to the generator function $f$ centered around 
$f(p(s,a)/q(s,a))\approx f(1)=0$, with $p$ and $q$ as the expert vs learner  distributions; 

    \item[ii.] \underline{\em $u$ has small standard deviation.} This means that $u$ concentrates on the nearby range of zero gap, leading to a small $f$-divergence between learner and expert, since $D_f(p(s,a)\|q(s,a))\approx \int q(s,a)f(1)\text{d}(s,a)=0$.

\end{enumerate}
%
%
%
%
 \begin{figure*}
\centering
\includegraphics[width=0.91\textwidth]{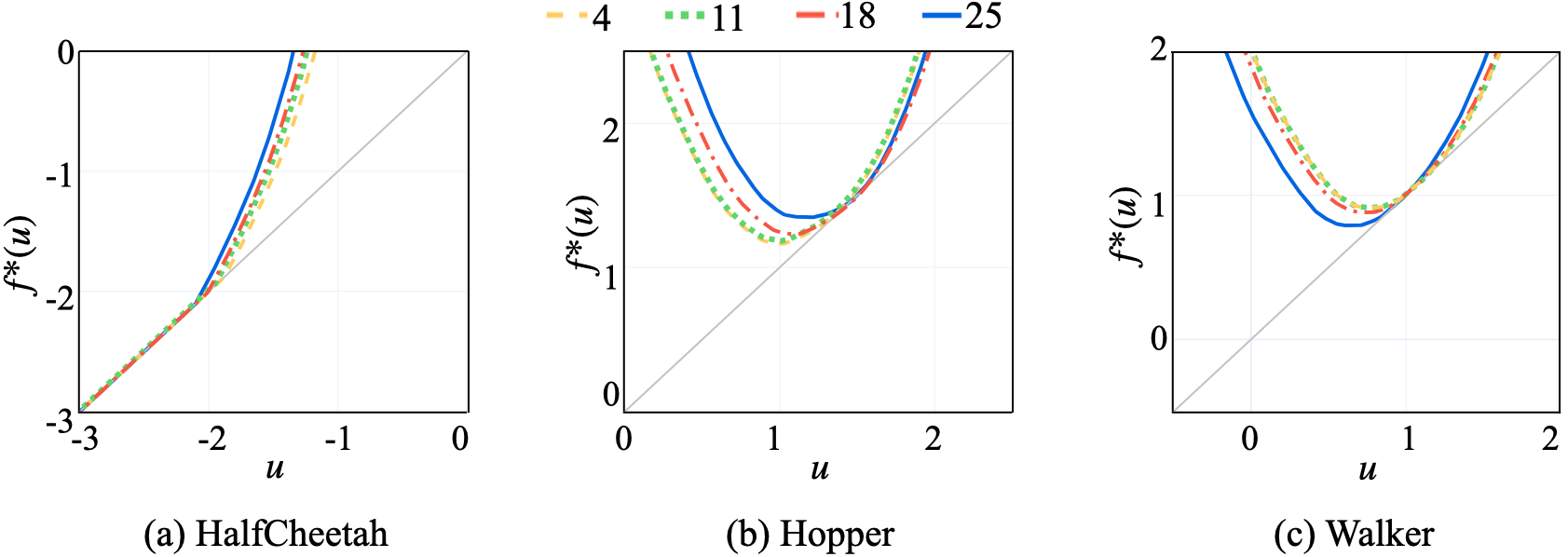}
\vspace{-1mm}
\caption{\modify{The learned $f^*_\phi(u)$ with different sample sizes.}}
\label{fig:f}
\vspace{-6mm}
\end{figure*}

In Fig.~\ref{fig:f_KDE}, we empirically estimated and showed the distributions of input $u$ for the state-of-the-art IL methods (including GAIL and RKL-VIM\footnote{With AIRL, similar results were obtained as that of RKL-VIM, since they both employ RKL divergence (while using different lower bounds). We omitted the results for AIRL for brevity.}) and our $f$-GAIL.
%
Fig.~\ref{fig:f_KDE} shows that overall $u$ distributions from our $f$-GAIL match the two criteria (i.e., close to zero gap and small standard deviation) better than baselines (See more statistical analysis on the two criteria across different approaches in Appendix~\ref{appendix:experiment}).
%
%
%
This indicates that learner policies learned from $f$-GAIL are with smaller divergence, i.e., higher quality. 
We will provide experimental results on the learned policies to further validate this in Sec~\ref{sec:experiment:performance} below.

 \begin{figure*}
\centering
\minipage{1\textwidth}
\centering
\includegraphics[width=1\textwidth]{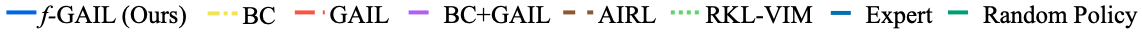}
\endminipage

\minipage{0.33\textwidth}
\centering
  \includegraphics[width=1\textwidth]{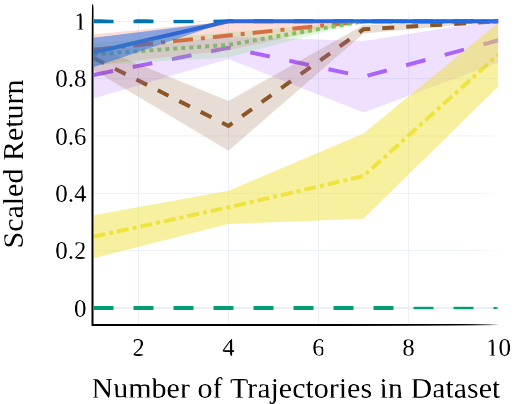}
    \subcaption{CartPole}
    \label{fig:demo_cartpole}
\endminipage
\vspace{1mm}
%
\minipage{0.32\textwidth}
\centering
  \includegraphics[width=1\textwidth]{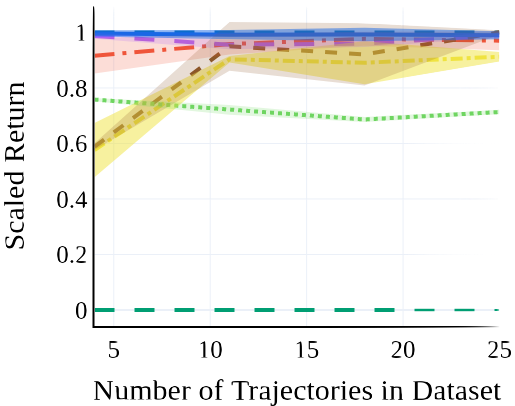}
\subcaption{HalfCheetah}
\label{fig:demo_halfcheetah}
\endminipage
\vspace{1mm}
%
\minipage{0.33\textwidth}
\centering
  \includegraphics[width=1\textwidth]{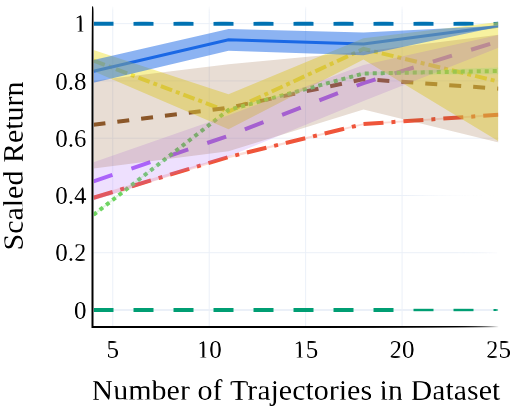}
  \subcaption{Hopper}
  \label{fig:demo_hopper}
\endminipage
\vspace{1mm}

\minipage{0.33\textwidth}
\centering
  \includegraphics[width=1.\textwidth]{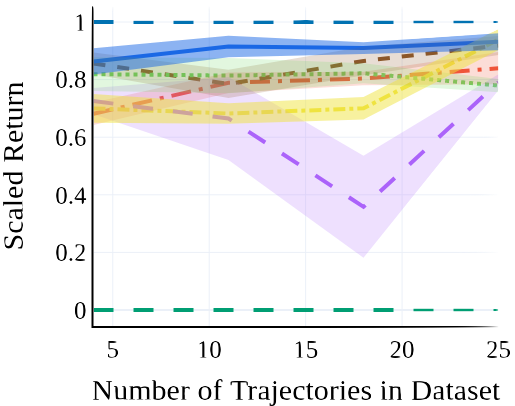}
  \subcaption{Reacher}
  \label{fig:demo_reacher}
\endminipage
\vspace{-1mm}
\hspace{-1mm}
\minipage{0.33\textwidth}
\centering
  \includegraphics[width=1.\textwidth]{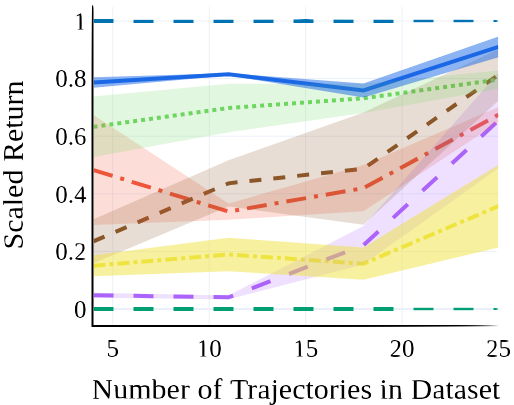}
  \subcaption{Walker}
  \label{fig:demo_walker}
\endminipage
%
\minipage{0.33\textwidth}
\centering
  \includegraphics[width=1\textwidth]{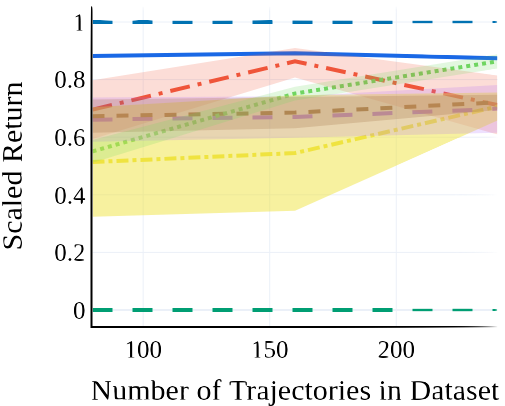}
  \subcaption{Humanoid}
\endminipage
\vspace{-1mm}
\caption{Performance of learned policies. The $y$-axis is the expected return (i.e., total reward), scaled so that the {\em expert} achieves 1 and a {\em random policy} achieves 0.}
\label{fig:demo_num}
\vspace{-6mm}
\end{figure*}
\vspace{-1mm}
\subsection{$f$-GAIL Performance in Policy Recovery}\label{sec:experiment:performance}
Fig.~\ref{fig:demo_num} shows the performances of our $f$-GAIL and all baselines under different training data sizes, 
and the tables in Appendix~\ref{appendix:experiment} provide detailed performance scores.
In all tasks, our $f$-GAIL outperforms all the baselines. Especially, in more complex tasks, such as Hopper, Reacher, Walker, and Humanoid,
$f$-GAIL shows a larger winning margin over the
baselines, with at least $80\%$ of expert performances for all datasets. 
%
%
GAIL shows lower performances on complex tasks such as Hopper, Reacher, Walker, and Humanoid, comparing to simple tasks, i.e., CartPole and HalfCheetah (with much smaller state and action spaces).
Overall, BC and BC initialized GAIL (BC+GAIL) have the lowest performances comparing to other baselines and our $f$-GAIL in all tasks. Moreover, they suffer from data efficiency problem, with extremely low performance when datasets are not sufficiently large.
These results are consistent with that of~\cite{jena2020loss}, and the poor performances can be explained as a result of compounding error by covariate shift~\cite{ross2010efficient,ross2011reduction}. 
%
%
%
 \begin{wrapfigure}{r}{0.30\textwidth}
 \vspace{-4mm}
     \centering
     \includegraphics[width=0.30\textwidth]{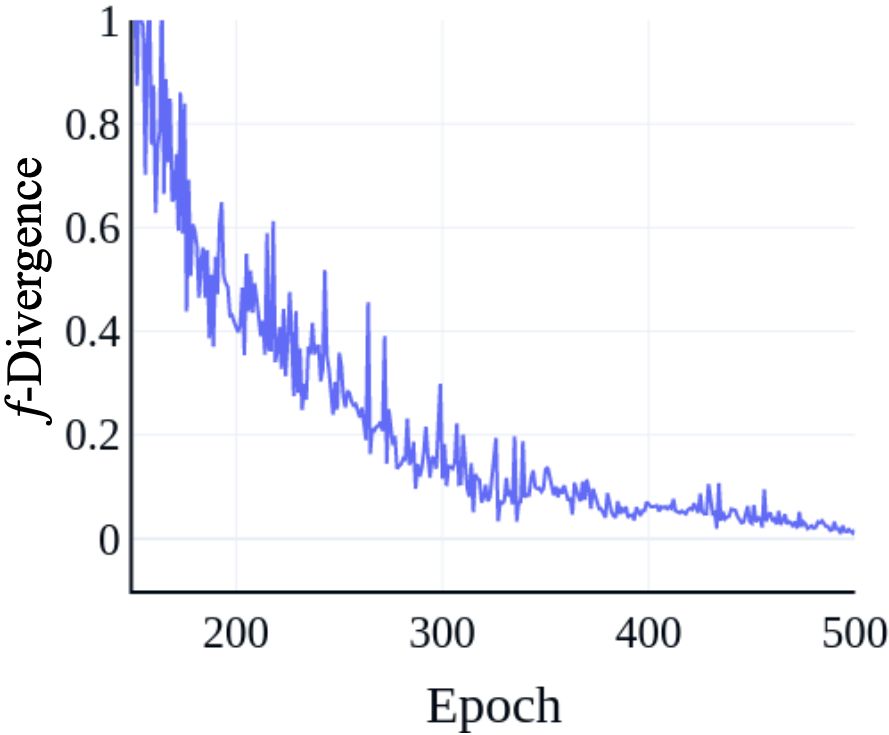}
     \vspace{-4mm}
     \caption{\modify{$f$-divergence curve in HalfCheetah.}}
     \label{fig:div_change}
     \vspace{-4mm}
 \end{wrapfigure}
AIRL performs poorly for Walker, with only 20\% of expert performance when 4 trajectories were used for training, which increased up to 80\% when using 25 trajectories.
%
%
%
%
RKL-VIM had reasonable performances on CartPole, Hopper, Reacher, and Humanoid when sufficient amount of data was used, but was not able to get more than 80\% expert performance for HalfCheetah, where our $f$-GAIL achieved expert performance. {(See Tab.~\ref{tab:results} in Appendix~\ref{appendix:experiment} for more detailed return values.)} 
\modify{In terms of the convergence of the proposed $f$-GAIL, Fig.~\ref{fig:div_change} below shows the training curve of $f$-divergence (i.e., the objective in eq.~(\ref{eq:fGAIL})) with respect to training epochs 
where it converges to less than $0.02$ for HalfCheetah after 450 epochs.
Similar results were observed in other tasks.}
\vspace{-1mm}

\vspace{-2mm}
\subsection{Ablation Experiments}
\label{sec:experiment:parameter}
\vspace{-1mm}

%
In this section, we investigate how structure choices of the proposed $f^*_\phi$ network, especially, the network expressiveness such as {\em the number of layers} and {\em the number of nodes per layer}, affect the model performance. 
%
%
%
In experiments, we took the CartPole, HalfCheetah and Reacher tasks as examples,
and fixed the network structures of policy $\pi_\theta$ and the reward signal $T_\omega$.
We changed the number of layers to be 1, 2, 4, and 7 (with 100 nodes each layer) and changed the number of nodes per layer to be 25, 50, 100 and 200 (with 4 layers). 
%
The comparison results are presented in Tab.~\ref{tab:layer}. 
In simpler tasks with smaller state and action space, e.g. the CartPole, we observed quick convergence with $f$-GAIL, achieving expert return of 200.
In this case, the structure choices do not have impact on the performance.
%
%
However, in more complex tasks such as HalfCheetah and Reacher, a simple linear transformation of input (with one convex transformation layer) is not sufficient to learn a good policy function $\pi_\theta$.
%
%
%
This naturally explains the better performances with the number of layers increased to 4 and the number of nodes per layer increased to 100.
%
%
However, 
further increasing the number of layers to 7 and the number of nodes per layer to 200 decreased the performance a little bit. 
%
As a result, for these tasks, 4 layers with each layer of 100 nodes suffice to represent an $f^*$-function. Consistent observations were made in other tasks, and we omit those results for brevity.

\begin{table}[]
    \centering
    \caption{Performances when changing {\em number of layers} and {\em number of nodes per layer} in $f^*_\phi$ network (Scores represent rewards. Higher scores indicate better learner policies).}
    \label{tab:layer}
    \setlength{\tabcolsep}{0.3pt}{
    \begin{tabular}{c|cccc|cccc}
    \hline
     \multirow{2}{*}{Task}& \multicolumn{4}{c|}{Number of Layers (100 nodes per layer)}& \multicolumn{4}{c}{Number of Nodes per Layer (4 layers)} \\
         & 1 & 2 & 4 & 7 & 25 & 50 & 100 & 200\\
    \hline
        \small{HalfCheetah} & 1539$\pm$144 & 4320$\pm$ 81& \small{\textbf{4445$\pm$79}} & 4100 $\pm$ 51 & 3546$\pm$132 & 4058$\pm$ 127& \small{\textbf{4445$\pm$79}} & 4343 $\pm$ 80\\
        Reacher & -22.8$\pm$ 4.2& -16.4$\pm$ 3.2& \small{\textbf{-10.6$\pm$2.6}} & -15.8$\pm$2.8 & -25.2$\pm$ 5.35 & -14.1 $\pm$ 5.2& \small{\textbf{-10.6$\pm$2.6}} & -12.6$\pm$4.0\\
        CartPole & \multicolumn{4}{c|}{200$\pm$0} & \multicolumn{4}{c}{200$\pm$0}\\
    \hline
    \end{tabular}}
\end{table}





\section{Discussion and Future Work}
Our work makes the first attempt to model imitation learning 
with a learnable $f$-divergence from the underlying expert demonstrations. The model automatically learns an  $f$-divergence between expert and learner behaviors, and a policy that produces expert-like behaviors.

\modify{\noindent{\bf Meaning of the best $f$-divergence.} As a minimax optimization problem in eq.~(\ref{eq:fGAIL}), 
$f$-GAIL searches for the best $f$-divergence in the ``max'' inner-loop {\em given the current learned policy $\pi$} learned from the ``min'' outer-loop, eventually leading to a stable solution of $(\pi, f^*)$.
Here, given an expert demonstration dataset, a better divergence can measure the discrepancy more precisely than other divergences, thus enables training a learner with closer behaviors to the expert.
}


\modify{\noindent{\bf Future work.}} This work focuses on searching within the $f$-divergence space, where Wasserstein distance~\cite{hitchcock1941distribution,arjovsky2017wasserstein} is not included. However, the divergence search space can be further extended to $c$-Wasserstein distance family~\cite{ambrogioni2018wasserstein}, which subsumes $f$-divergence family and Wasserstein distance as special cases. Designing a network structure to represent $c$-Wasserstein distance family is challenging (we leave it as part of our future work), while a naive way is to model it as a convex combination of the $f$-divergence family (using our $f^*_\phi$ network) and Wasserstein distance.
%
%
Moreover, beyond imitation learning, our $f^*$-network structure can be potentially ``coupled'' with $f$-GAN~\cite{nowozin2016f} and $f$-EBM~\cite{yu2020training} to learn an $f$-divergence between the generated vs real data distributions (e.g., image and audio files), which in turn trains a higher quality generator.

%
%
\label{sec:conclusion}

\section*{Broader Impact}

%
This paper aims to advance the imitation learning techniques, by learning an optimal discrepancy measure from $f$-divergence family, which has a wide range of applications in robotic engineering, system automation and control, etc. The authors do not expect the work will address or introduce any societal or ethical issues.



\begin{ack}
Xin Zhang and Yanhua Li were supported in part by NSF grants IIS-1942680 (CAREER), CNS-1952085, CMMI-1831140, and DGE-2021871. Ziming Zhang was supported in part by NSF CCF-2006738. Zhi-Li Zhang was supported in part by NSF grants CMMI-1831140 and CNS-1901103.  

\end{ack}

\bibliographystyle{plain}
\bibliography{references}

\newpage
\appendix

\section{Proof for Equation~(\ref{eq:shift_f}) in Section~\ref{sec:fGAIL}}\label{Sec:App:Proofeq7}
In Section~\ref{sec:fGAIL}, we propose a shifting operation in eq.~(\ref{eq:shift_f}) to transform any convex function to a convex conjugate generator function of an $f$-divergence.
%
%
Below, we summarize the shifting operation and prove its efficacy in proposition~\ref{prop:f_conjugate}. 

\begin{proposition}\label{prop:f_conjugate}
Given a
convex function  $f^*_\phi:\text{dom}_{{f}^*_{\phi}}\mapsto\mathbb{R}$, 
applying the shifting operation below transforms it to a convex conjugate generator function of an $f$-divergence,

 \begin{equation}
     {f}^*_{\phi^\prime}(u) = {f}^*_{\phi}(u-\frac{\delta}{2})-\frac{\delta}{2}, \quad\mathrm{where} \quad \delta = \inf_{u\in{\text{dom}_{{f}^*_{\phi}}}}\{{f}^*_{\phi}(u)-u\}.
 \end{equation}
\end{proposition}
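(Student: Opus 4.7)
The plan is to verify that the shifted function $f^*_{\phi'}$ satisfies the two characterizing properties of a valid convex conjugate generator of an $f$-divergence, as discussed in Section~\ref{sec:fGAIL}: namely, (i) convexity and (ii) the zero gap condition $\inf_u\{f^*_{\phi'}(u)-u\}=0$. Once (i) and (ii) hold, the footnote in Section~\ref{sec:problem} tells us (via the Fenchel--Moreau identity $f^{**}=f$ for convex lower-semicontinuous $f$) that the associated $f=(f^*_{\phi'})^*$ satisfies $f(1)=\max_u\{u-f^*_{\phi'}(u)\}=-\inf_u\{f^*_{\phi'}(u)-u\}=0$, which, combined with convexity of $f$ inherited from conjugation, is exactly the definition of a generator function of an $f$-divergence in eq.~(\ref{eq:fdiv}).

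For (i), the argument is essentially trivial: translating the argument of a convex function by the constant $-\delta/2$ and then subtracting the constant $\delta/2$ from the output preserves convexity, since convexity is invariant under affine reparametrizations of input and additive shifts of output. So $f^*_{\phi'}$ inherits convexity from $f^*_\phi$ immediately.

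For (ii), the main calculation is a change of variables inside the infimum. Writing $v=u-\delta/2$, one has
\begin{equation*}
f^*_{\phi'}(u)-u = f^*_\phi(u-\tfrac{\delta}{2})-\tfrac{\delta}{2}-u = f^*_\phi(v)-v-\delta.
\end{equation*}
Taking the infimum over $u\in\text{dom}_{f^*_{\phi'}}$ is equivalent to taking the infimum over $v\in\text{dom}_{f^*_\phi}$ (the shift sets up a bijection between the two domains), giving
\begin{equation*}
\inf_{u\in\text{dom}_{f^*_{\phi'}}}\{f^*_{\phi'}(u)-u\} = \inf_{v\in\text{dom}_{f^*_\phi}}\{f^*_\phi(v)-v\}-\delta = \delta-\delta = 0,
\end{equation*}
which is precisely the zero gap condition.

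The only subtlety I anticipate is bookkeeping around the domain and whether $\delta$ is actually attained as an infimum rather than merely approached; this is acknowledged in the paper's footnote about the projection operator. Since the proposition only requires $\delta=\inf\{\cdot\}$ (not that it be a minimum), the change-of-variables argument above goes through verbatim under the convention that infima are taken over the (possibly open) effective domain, and no attainment is required for the identity to hold. Hence the two conditions are established and the proof is complete.
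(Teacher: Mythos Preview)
Your proof is correct and follows essentially the same approach as the paper: both verify convexity by noting that input/output shifts preserve it, and both establish the zero gap by the substitution $v=u-\delta/2$ to reduce $f^*_{\phi'}(u)-u$ to $(f^*_\phi(v)-v)-\delta$. Your version is in fact slightly cleaner, since you work directly with the infimum rather than (as the paper does) assuming the existence of a minimizer $\tilde u$; but the argument is otherwise identical.
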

\begin{proof}
As presented in Section~\ref{sec:fGAIL}, for an $f$-divergence, its convex conjugate generator function ${f}^*_{\phi^\prime}(u)$ is {\em i)} convex, and {\em ii)} with zero gap from $u$, i.e., $\inf_{u\in \text{dom}_{f^*_{\phi^\prime}}} \{f^*_{\phi^\prime} (u)-u\} = 0$. Below, we prove that both these two constraints hold for the obtained ${f}^*_{\phi^\prime}(u)$.

\noindent{\bf Convexity.} Since a constant shift of a convex function preserves the convexity~\cite{boyd2004convex}, the obtained ${f}^*_{\phi^\prime}(u)$ is convex.

\noindent{\bf Zero gap.}
%
%
%
Given $\delta = \inf_{u\in{\text{dom}_{\bar{f}^*}}}\{\bar{f}^*(u)-u\}$, we denote the $\tilde{u}$ as the value that attains the infimum. Hence, we have ${f^*_{\phi}(u)-u} \geq \delta$ 
for $\forall u \in \text{dom}_{{f}^*_\phi}$.
%
For the transformed function ${f^*_{\phi^\prime}}(u) = {f^*_{\phi}}(u-\frac{\delta}{2})-\frac{\delta}{2}$, we naturally have 
    \begin{equation*}
    {f^*_{\phi^\prime}}(u)-u=
    {f^*_{\phi}(u-\frac{\delta}{2}) -\frac{\delta}{2}}-u =
{f^*_{\phi}(u-\frac{\delta}{2}) -(u-\frac{\delta}{2}})-\delta \geq
%
    \delta-\delta = 0, \quad \forall u \in \text{dom}_{{f}^*_{\phi}},
    \end{equation*}{}
%
and the infimum is attained at $\tilde{u}+\frac{\delta}{2}$. This implies that the zero gap constraint $\inf_{u\in \text{dom}_{{f}^*_{\phi^\prime}}}\{ {f^*_{\phi^\prime}}(u) -u \}= 0$ holds.
    
\end{proof}

\section{Environments and Detailed Results}\label{appendix:experiment}

 The environments we used for our experiments are from the OpenAI Gym \cite{brockman2016openai} including the CartPole~\cite{barto1983neuronlike} from the classic RL literature, and five complex tasks simulated with MuJoCo~\cite{todorov2012mujoco}, such as HalfCheetah, Hopper, Reacher, Walker, and Humanoid with task screenshots and version numbers shown in Fig.~\ref{fig:game_demo}. 
 
\noindent{\bf Details of policy network structures.} The policy network structures $\pi_\theta$ of all the baselines and $f$-GAIL are the same in all experiments, with two hidden layers of 100 units each, and tanh nonlinearlities in between. Note that 
behavior cloning (BC) employs the same structure to train a policy network with supervised learning.

\noindent{\bf Details of reward signal network structures.}
The reward signal network used in GAIL, BC+GAIL, AIRL, RKL-VIM and $f$-GAIL are all composed of three hidden layers of 100 units each with first two layers activated with tanh, and the final activation layers listed in Tab.~\ref{tab:f_functions}.

\noindent{\bf Details of $f^*_\phi$ network structure in $f$-GAIL.} For the study of the $f^*$ function in Sec~\ref{sec:experiment:fine_tuning} and the performances of the learned policy in Sec~\ref{sec:experiment:performance}, the $f^*_\phi$ network is composed of 4 linear layers with hidden layer dimension of 100 and ReLU activation in between. For the ablation study in Sec~\ref{sec:experiment:parameter}, we changed the number of linear layers to be 1, 2, 4 and 7 (with 100 nodes per layer) and the number of nodes per layer to be 25, 50, 100, and 200 (with 4 layers).
 
\noindent{\bf Evaluation setup.}
For all the experiments, the amount of environment interaction used for GAIL, BC+GAIL, AIRL, RKL-VIM and the $f$-GAIL together with expert and random policy performances in each task is shown in Tab.~\ref{tab:interaction}. We followed GAIL~\cite{GAIL} to fit value functions, with the same neural network architecture as the policy networks, and employed generalized advantage estimation~\cite{schulman2015high} with $\gamma=0.99$ and $\lambda=0.95$, so that the gradient variance is reduced.

\begin{figure*}[h]
\vspace{-2mm}
\minipage{0.30\textwidth}
\centering
  \includegraphics[width=1\textwidth]{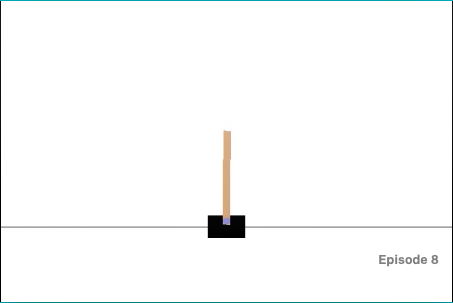}
    \subcaption{CartPole-v0}
    \label{fig:cartpole_demo}
\endminipage
\hspace{4mm}
\minipage{0.30\textwidth}
\centering
  \includegraphics[width=1\textwidth]{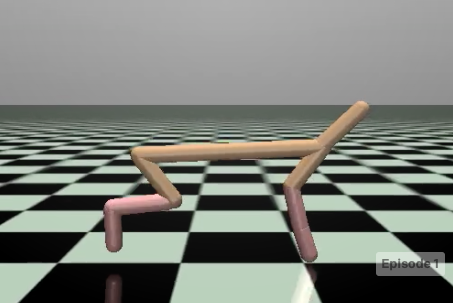}
    \subcaption{HalfCheetah-v2}
    \label{fig:halfcheetah_demo}
\endminipage
%
\hspace{4mm}
\minipage{0.30\textwidth}
\centering
  \includegraphics[width=1\textwidth]{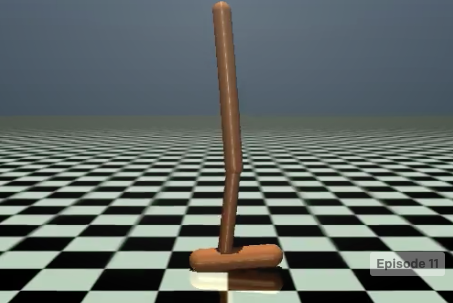}
  \subcaption{Hopper-v2}
  \label{fig:hopper_demo}
\endminipage

\minipage{0.30\textwidth}
\centering
  \includegraphics[width=1\textwidth]{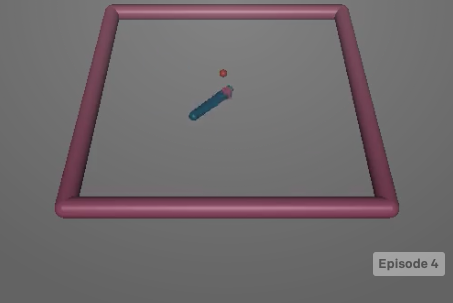}
  \subcaption{Reacher-v2}
  \label{fig:reacher_demo}
\endminipage
\hspace{4mm}
\minipage{0.30\textwidth}
\centering
  \includegraphics[width=1\textwidth]{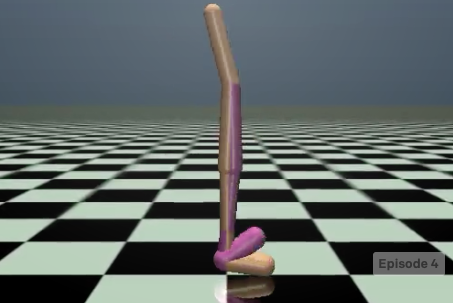}
  \subcaption{Walker-v2}
  \label{fig:walker_demo}
\endminipage
\hspace{4mm}
\minipage{0.30\textwidth}
\centering
  \includegraphics[width=1\textwidth]{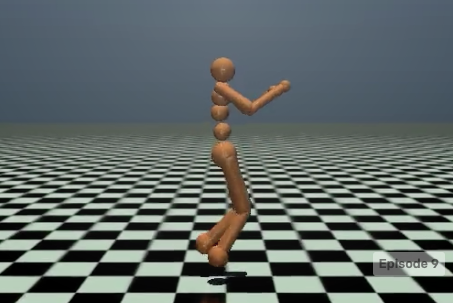}
  \subcaption{Humanoid-v2}
\endminipage

\vspace{-2mm}
\caption{Screenshots of six physics-based control tasks~\cite{todorov2012mujoco}.}
\vspace{-0.7cm}
\label{fig:game_demo}
\end{figure*}

\begin{table}[!htb]
    \begin{minipage}{.28\linewidth}
      \caption{Final layer activation functions for Reward Signal Networks.}
    \label{tab:f_functions}
      \centering
     \setlength{\tabcolsep}{0.3pt}{
        \begin{tabular}{c|c}
    \hline
         IL methods & Activation\\
         \hline
         GAIL & Sigmoid$(v)$\\
          \hline
         BC+GAIL & Sigmoid$(v)$\\
          \hline
         AIRL & Sigmoid$(v)$\\
          \hline
         RKL-VIM & $-\exp(v)$\\
          \hline
    \end{tabular}}
    \end{minipage}
    %
    \begin{minipage}{.6\linewidth}
      \centering
        \caption{Parameters for baselines and $f$-GAIL.}
     \label{tab:interaction}
    \setlength{\tabcolsep}{0.3pt}{
    \begin{tabular}{ccccccc}
\hline
        Task  & Training & Number of $(s,a)$ & Expert & Random policy \\
         & iterations & per iteration & performance& performance\\
    \hline
        CartPole-v0 & 200 & 200 & 200$\pm$0 &17$\pm$ 4\\
        HalfCheetah-v2 & 500 & 2000 & 4501$\pm$118& -901$\pm$49\\
        Hopper-v2 & 500 & 2000 & 3593$\pm$19 & 8$\pm$ 6\\
        Reacher-v2 & 500 & 2000 & -4.5$\pm$1.7 & -93.7 $\pm$4.8 \\
        Walker-v2 & 500 & 2000 & 5657$\pm$33 & -2$\pm$3\\
        Humanoid-v2  & 700 & 30000 & 10400$\pm$55 & 101$\pm$26 \\
    \hline 
    \end{tabular}}
    \end{minipage}
\end{table} 

\subsection{Detailed statistical results on Learned $f^*_{\phi}$ function} 

As explained in Sec~\ref{sec:experiment:fine_tuning}, two criteria for the input distribution to the $f^*_{\phi}$ function govern the quality of the learned policy $\pi_\theta$, namely, {\em (i)} input $u$ centers around zero gap; {\em (ii)} input $u$ has small standard deviation. 
%
%
%
%
Now, based on Fig.~\ref{fig:f_KDE}, we analyze how much different IL methods satisfy the two criteria in all six tasks.
\begin{enumerate}[nosep, leftmargin=*]
    \item[$\bullet$] To quantify criterion (i), we denote $\tilde{u}$ as the input value with zero gap, i.e., $f^*_{\phi} (\tilde{u})-\tilde{u} = 0$, and $\bar{u}$ as the mean of the input $u$. Thus, we quantify the criterion (i) using the absolute difference between $\tilde{u}$ and $\bar{u}$, i.e., $\Delta_u=|\tilde{u}-\bar{u}|$.
    \item[$\bullet$] To quantify criterion (ii), we estimate the standard deviations $\sigma$ of input distributions for different IL methods in all tasks. 
\end{enumerate}

For both $\Delta_u$ and $\sigma$, the smaller values indicate a learner policy closer to expert policy. As a result, we examine their sum, i.e., $\Delta_u+\sigma$ as a unifying metric to evaluate overall how the two criteria are met. Tab.~\ref{tab:fKDE2} shows the detailed results of $\Delta_u$, $\sigma$, and $\Delta_u+\sigma$. It shows that our proposed $f$-GAIL learns an $f^*_\phi$ function with consistently lower values on $\Delta_u+\sigma$, comparing to all baselines, which indicates that the learned $f^*_\phi$ function from $f$-GAIL can meet the two criteria better than baselines.

%

 
\subsection{Detailed results on learner policies}

The exact learned policy return are listed in Tab.~\ref{tab:results}. The means and standard deviations are computed over 50 trajectories. A higher return indicates a better learned policy. All results are computed over 5 policies learned from random initializations.



\begin{table}[H]
    \centering
    \caption{Analysis on input distributions of $f^*$ functions.}
    \label{tab:fKDE2}
    \setlength{\tabcolsep}{9pt}{
    \begin{tabular}{c|cccccc}
    \hline
    Task & CartPole & HalfCheetah & Hopper & Reacher & Walker & Humanoid \\
    \hline
     $f$-GAIL & {\bf 0.28} & {\bf 0.62} & {\bf 0.51} & {\bf 0.60} & {\bf 0.49} & {\bf 0.52} \\
     RKL-VIM & 1.25 & 0.96 & 1.36 & 2.14 & 4.62 & 2.85 \\
     GAIL & 1.96 & 1.31 & 2.09 & 2.08 & 4.06 & 3.55 \\
    \hline
    \end{tabular}}
\vspace{-6mm}
\end{table}

 \begin{table}[h!]
 \vspace{-7mm}
    \centering
    \caption{Learned policy performance.}
    \setlength{\tabcolsep}{0.5pt}{
    \begin{tabular}{cccccccc}
    \hline
        Task & Datasize & BC & GAIL & BC+GAIL & AIRL & RKL-VIM & $f$-GAIL (Ours)  \\
    \hline
        \multirow{4}{*}{CartPole} & 1 & 62$\pm$13&\textbf{181$\pm$9}&165$\pm$14&176$\pm$7&179$\pm$7&180$\pm$9\\
          & 4 & 81$\pm$10&191$\pm$9&183$\pm$7&133$\pm$15&185$\pm$8&\textbf{200$\pm$0}\\
          & 7 & 101$\pm$27&\textbf{200$\pm$0}&164$\pm$22&194$\pm$2&\textbf{200$\pm$0}&\textbf{200$\pm$0}\\
          & 10 & 178$\pm$20&199$\pm$0&187$\pm$13&\textbf{200$\pm$0}&\textbf{200$\pm$0}&\textbf{200$\pm$0}\\
         \hline
        \multirow{4}{*}{HalfCheetah} & 4 & 2211$\pm$528&4047$\pm$344&4431$\pm$56&2276$\pm$65&3194$\pm$30&\textbf{4481$\pm$60}\\
         & 11 & 3979$\pm$61&4274$\pm$202&4263$\pm$90&4230$\pm$473&2994$\pm$94&\textbf{4457$\pm$89}\\
         & 18 & 3911$\pm$416&4377$\pm$135&4282$\pm$67&4073$\pm$605&2806$\pm$46&\textbf{4461$\pm$132}\\
         & 25 & 4027$\pm$91&4340$\pm$185&4447$\pm$48&\textbf{4501}$\pm$42&2952$\pm$45&4445$\pm$79\\
        \hline
        \multirow{4}{*}{Hopper} & 4 & \textbf{3129$\pm$132}&1413$\pm$26&1619$\pm$240&2328$\pm$549&1200$\pm$16
        &2996$\pm$142\\
         & 11 & 2491$\pm$218&1923$\pm$16&2188$\pm$257&2539$\pm$544&2513$\pm$3
         &\textbf{3390$\pm$135}\\
         & 18 & 3276$\pm$133&2336$\pm$10&2849$\pm$224&2898$\pm$362&2969$\pm$17&
         \textbf{3339$\pm$142}\\
         & 25 & 2868$\pm$745&2452$\pm$12&3372$\pm$79&2779$\pm$675&3001$\pm$42&
         \textbf{3561$\pm$6}\\
         \hline
         \multirow{4}{*}{Reacher} & 4 & -31.3$\pm$4.4 & -33.0$\pm$ 3.5& -29.0$\pm$4.0 & -17.4$\pm$3.3 & -20.7$\pm$5.2 & \textbf{-16.7$\pm$4.0}\\
         & 11 & -32.9$\pm$3.1 & -23.4$\pm$ 3.2& -34.4$\pm$12.8 & -23.7$\pm$4.3 & -21.1$\pm$5.4 & \textbf{-12.1$\pm$3.3}\\
         & 18 & -31.3$\pm$3.4 & -22.1$\pm$ 2.1& -61.8$\pm$15.7 & -16.6$\pm$4.4 & -20.4$\pm$3.1 & \textbf{-12.6$\pm$1.8}\\
         & 25 & \textbf{-10.0$\pm$3.2} & -18.9$\pm$ 5.0& -23.2$\pm$2.4 & -11.8$\pm$2.9 & -24.2$\pm$2.0 & -10.6$\pm$2.6\\
         \hline
        \multirow{4}{*}{Walker2d} & 4 & 848$\pm$206&2728$\pm$1079&267$\pm$50&1327$\pm$431&3577$\pm$594&\textbf{4448$\pm$103}\\
         & 11 & 1068$\pm$328&1911$\pm$160&226$\pm$36&2466$\pm$454&3947$\pm$475&\textbf{4609$\pm$22}\\
         & 18 & 888$\pm$316&2372$\pm$453&1251$\pm$378&2755$\pm$1103&4138$\pm$287&\textbf{4290$\pm$139}\\
         & 25 & 2018$\pm$812&3816$\pm$148&3700$\pm$939&4599$\pm$504&4507$\pm$179&\textbf{5148$\pm$205}\\
         \hline
         \multirow{3}{*}{Humanoid} & 80 & 
         5391$\pm$3918&7268$\pm$2101&6908$\pm$1577
         &7034$\pm$591&5772$\pm$409&\textbf{9180$\pm$49}\\
         & 160 & 5713$\pm$4126&8994$\pm$1053&7003$\pm$1488
         &7160$\pm$559&7842$\pm$245&\textbf{9280$\pm$68}\\
         & 240 & 7378$\pm$998&7430$\pm$2106&7294$\pm$1705
         &7528$\pm$273&8993$\pm$252&\textbf{9130$\pm$114}\\
    \hline 
    \end{tabular}}
    \label{tab:results}
\end{table}{}

\end{document}